\newtheorem{theorem}{Theorem}[section]
\newtheorem{lemma}{Lemma}
\newtheorem{remark}{Remark}[section]
\newcommand{\tabincell}[2]{\begin{tabular}{@{}#1@{}}#2\end{tabular}}
\title{Disease Prediction with a Maximum Entropy Method}
\author{Michael Shub\footnote{Math Department, City College and the Graduate Center of CUNY.}, ~Qing Xu\footnote{UniDT, Shanghai, China.}, ~Xiaohua(Michael) Xuan\footnote{UniDT, Shanghai, China.}~\footnote{This work was partially supported by the Smale Institute.}}
\date{}
\begin{document}
\maketitle

\begin{abstract}
	In this paper, we propose a maximum entropy method for predicting disease risks. It is based on a patient's medical history with diseases coded in ICD-10 which can be used in various cases. The complete algorithm with strict mathematical derivation is given. We also present experimental results on a medical dataset, demonstrating that our method performs well in predicting future disease risks and achieves an accuracy rate twice that of the traditional method. We also perform a comorbidity analysis to reveal the intrinsic relation of diseases.
\end{abstract}

\section{Introduction}

Disease prediction is an effective way to assess a person's health status. Studies have shown that in many cases, there are identifiable indicators or preventable risk factors before the onset of the patient's disease. These early warnings can effectively reduce the individual's risk of disease. Theoretically, this can reduce the number of treatments needed and increase the necessary effective interventions. However, the combination of problem factors caused by different diseases and the patient's past medical history are so complicated that no doctor can fully understand all of this. Currently, doctors can use family and health history and physical examinations to estimate the patient’s risk and guide laboratory tests to further evaluate the patient's health. However, these sporadic and qualitative "risk assessments" are usually only for a few diseases, depending on the experience, memory and time of the particular doctor. Therefore, the current medical care is after the fact. Once the symptoms of the disease appear, it is involved, rather than actively treating or eliminating the disease as soon as possible.

Today the prevailing model of prospective heath care is firmly based on the genome revolution. Indeed, technologies ranging from linkage equilibrium and candidate gene association studies to genome wide associations have provided an extensive list of disease-gene associations, offering us detailed information on mutations, SNPs, and the associated likelihood of developing specific disease phenotypes.

The basic assumption behind the research is that once we have classified all disease-related mutations, we can use various molecular biomarkers to predict each individual's susceptibility to future diseases, thus bringing us into a predictive medicine era. However, these rapid advances have also revealed the limitations of genome-based methods. Considering that the signals provided by most disease-related SNPs or mutations are very weak, it is becoming increasingly clear that the prospect of genome-based methods may not be realized soon.

Does this mean that prospective disease prediction methods must wait until genomics methods are sufficiently mature? Our purpose is to prove that the method based on medical history provides hope for the prospective prediction of disease.

In this paper, we mainly study the disease prediction and comorbidity of diseases. Our approach is distinctly different in that we are trying to build a general predictive system which can utilize a less constrained feature space, i.e. taking into account all available demographics and previous medical history. Moreover, we rely primarily on ICD-10-CM (International Classification of Diseases, Tenth Revision, Clinical Modification) codes (see Section 2) for making predictions to account for the previous medical history, rather than specialized test results.

\section{Data}

\subsection{Source Data and Population}

Our database comprises the medical records of 354,552 patients in China with a total of 2,904,257 hospital visits. The data was originally compiled from Insurance claims during 2007 to 2017. Such medical records are highly complete and accurate, and they are frequently used for epidemiological and demographic research.

The input for our methods consists of each patient's personal information, such as gender, birthday, treatment-date, and diagnosis history, provided per patient's visit. Each data record consists of a hospital visit, represented by a patient ID and a diagnosis code per visit, as defined by the International Classification of Diseases, Tenth Revision,Clinical Modification(ICD-10-CM). The International
Statistical Classification of Diseases and Related Health Problems (ICD) provides
codes to classify diseases and a wide variety of signs, symptoms, abnormal findings, social circumstances, and external causes of injury or disease. It is published by the
World Health Organization.

Each disease or health condition is given a unique code, and can be up to 6 characters long, such as A01.001. The first character is a letter while the others are digits. ICD-10 codes are hierarchical in nature, so the 6 characters codes can be collapsed to fewer characters identifying a small family of
related medical conditions. For instance, code A01.001 is a specific code for typhoid fever. This code can be collapsed to A01.

Moreover, we classify diseases of the same category into one class. For example, A90 is the code for Dengue fever (classical dengue) and A91 is the code for Dengue hemorrhagic fever. We classify them into the same class named F\_A90. Thus, the 20 thousand origin ICD-10 codes are classified into 429 classes.

A sample patient medical history is shown in Table 1. Each line represents one hospital visit. Demographic data are also available.


\begin{table}[H]
	\centering
	\caption{Medical History Sample}
	\begin{tabular}{cccc}
		\hline
		\multicolumn{1}{c}{\textbf{patient\_id}} & \multicolumn{1}{c}{\textbf{gender}} & \multicolumn{1}{c}{\textbf{treatment\_date}} & \multicolumn{1}{c}{\textbf{code}} \\ \hline
		14532                                     & F                                   & 2011-10-15                                   & F\_M47                            \\
		14532                                     & F                                   & 2011-11-19                                   & F\_N91                            \\
		14532                                     & F                                   & 2012-10-09                                   & F\_L20                            \\
		14532                                     & F                                   & 2012-10-19                                   & F\_N60                            \\
		14532                                     & F                                   & 2013-05-08                                   & F\_B37                            \\
		14532                                     & F                                   & 2013-06-04                                   & F\_H10                            \\
		14532                                     & F                                   & 2013-06-15                                   & F\_K04                            \\
		14532                                     & F                                   & 2013-08-23                                   & F\_L20 \\
		\hline                          
	\end{tabular}
\end{table}

In our medical database, the number of visits per patient ranges from 1 to 491, with a median of 4. Also, the average is 8.19. Table 2 shows the 20 most prevalent diseases in our database.

\begin{table}[H]
	\centering
	\caption{20 Most Prevalent Diseases}
	\begin{tabular}{cc}
		\hline
		\textbf{Disease}                                    & \textbf{Prevalence} \\
		\hline
		Acute upper respiratory infection                   & 20.88\%                  \\
		Hypertension and its complications                  & 7.35\%                   \\
		Dermatitis and pruritus                             & 3.75\%                   \\
		Gastritis and duodenitis                            & 3.49\%                   \\
		Chronic bronchitis                                  & 3.28\%                   \\
		Pulp, gum, and alveolar ridge diseases              & 3.15\%                   \\
		Hard tissue disease of teeth                        & 2.73\%                   \\
		Abnormal uterine and vaginal bleeding               & 2.42\%                   \\
		Chronic rhinitis, nasopharyngitis and   pharyngitis & 1.99\%                   \\
		Non-infectious gastroenteritis and   colitis        & 1.97\%                   \\
		Chronic ischemic heart disease                      & 1.93\%                   \\
		Inflammation of the vagina and vulva                & 1.72\%                   \\
		Pneumonia                                           & 1.63\%                   \\
		Abnormal thyroid (parathyroid) function             & 1.62\%                   \\
		Other diabetes                                      & 1.56\%                   \\
		Backache                                            & 1.54\%                   \\
		Acute lower respiratory infection                   & 1.51\%                   \\
		Cervical disc disease                               & 1.44\%                   \\
		Type II diabetes                                    & 1.37\%                   \\
		Female pelvic inflammatory disease                  & 1.18\%      \\ \hline            
	\end{tabular}
\end{table}

\subsection{Quantifying the Strength of Comorbidity Relationships}

In order to measure the correlation from disease comorbidity, we need to quantify the intensity of disease comorbidity by introducing the concept of distance between the two diseases. One difficulty of this method is that there are biases in different statistical measures, which overestimate or underestimate the relationship between rare or epidemic diseases. Given that the number of diagnoses (prevalence) for a particular disease follows a long tail distribution, these biases are important, which means that although most diseases are rarely diagnosed, a small number of diseases have been diagnosed in a large part of the population.

Therefore, quantifying comorbidity usually requires us to compare diseases that affect dozens of patients with diseases that affect millions of patients.

We will use two comorbidity measures to quantify the distance between two diseases: The Absolute Logarithmic Relative Risk (ALRR) and $\phi$-correlation($\phi$). 

The Emperical Relative Risk of observing a pair of diseases $i$ and $j$ affecting the same patient is given by
$$ERR_{ij}=\frac{C_{ij}N}{P_iP_j},$$
where $C_{ij}$ is the number of patients affected by both diseases, $N$ is the total number of patients in the population and $P_i$ and $P_j$ are the prevalences of diseases $i$ and $j$. 

$$ERR_{ij}=\frac{\frac{C_{ij}}{N}}{\frac{P_i}{N}\cdot\frac{P_j}{N}}.$$

Thus, the Relative Risk is defined as 
$$RR_{ij}=\frac{p_{ij}}{p_ip_j},$$

Here, $p_{ij}$ is the transition probability from disease $i$ to disease $j$ and $p_i$ is the incidence probability of disease $i$. The Absolute Logarithmic Relative Risk is defined as 
$$M_{ij}=\left|\log\left(\frac{p_{ij}}{p_ip_j}\right)\right|.$$

The emperical $\phi$-correlation, which is Pearson’s correlation for binary variables, can be expressed mathematically as
$$\frac{C_{ij}N-P_iP_j}{\sqrt{P_iP_j(N-P_i)(N-P_j)}}=\frac{\frac{C_{ij}}{N}-\frac{P_i}{N}\cdot\frac{P_j}{N}}{\sqrt{\frac{P_i}{N}\cdot\frac{P_j}{N}(1-\frac{P_i}{N})(1-\frac{P_j}{N})}}.$$

Therefore, the $\phi$-correlation is defined as 

$$\phi_{ij}=\frac{p_{ij}-p_ip_j}{\sqrt{p_ip_j(1-p_i)(1-p_j)}}.$$

These two comorbidity measures are not completely independent of each other, as they both increase with the number of patients affected by both diseases, yet both measures
have their intrinsic biases. For example, RR overestimates
relationships involving rare diseases and underestimates the
comorbidity between highly prevalent illnesses, whereas $\phi$ accurately discriminates comorbidities between pairs of diseases of similar prevalence but underestimates the comorbidity between rare and common diseases.

\section{Methodology}

In this section. We will formulate the maximum entropy method we used to predict disease risk.

\subsection{Some notations}\label{notation}
Suppose there are $n$ diseases and $N$ records. Let us use $d_i$ to denote disease $i$. A record is a pair of diseases $(d_i,d_j)$ which means that  there is a patient with a diagnosis of disease $d_j$ simultaneously or after disease $d_i$. Let us use $S_k$ to denote record $k$
($1\leq i\leq n, 1\leq k\leq N$).

Assume that $S_k=(f(k),g(k))$. Here, $f$ and $g$ are maps.
$$f,g:\{1,2,\cdots,N\}\rightarrow\{1,2,\cdots,n\}.$$
$f(k)$ is called the first disease and $g(k)$ is called the second disease in record $k$.

In this paper, we assume that $f$ and $g$ are surjective. If $f$ is not surjective, we can remove the diseases with indexes in $\{1,2,\cdots,n\}\backslash \text{Range}(f)$ from the medical history. Then the surjective assumption can be satisfied for $f$. The same can be done for $g$.

Assume that
\[X_{ik}=\left\{
\begin{split}
	&1,\ f(k)=i\\
	&0,\ f(k)\neq i
\end{split}
\right.,\quad 
Y_{kj}=\left\{
\begin{split}
	&1,\ g(k)=j\\
	&0,\ g(k)\neq j
\end{split}
\right..\]

Denote by 
\[\chi_C=\left\{
\begin{aligned}
	&1,\ x\in C\\
	&0,\ x \notin C
\end{aligned}
\right..\]
Let $A=XY,B=YX$. Then 

$$A_{ij}=\sum_{k=1}^NX_{ik}Y_{kj}=\sum_{k=1}^N\chi_{\{f(k)=i,g(k)=j\}}$$
is the number of patients who suffer from disease $i$ before disease $j$.

$$B_{km}=\sum_{j=1}^nY_{kj}X_{jm}=\sum_{j=1}^n\chi_{\{f(m)=j,g(k)=j\}}.$$

$B$ is a matrix with entries ranged in $\{0,1\}$. $B_{km}=1$ if and only if there is a disease $j$ such that the patient in record $k$ suffer a second disease $j$ and the patient in record $m$ suffer a first disease $j$. Our task is to evaluate the transition probability from record $k$ to record $m$.

\subsection{Entropy for Markov Chains}

Suppose $M$ is a non-negative matrix. If for each $k,m\in\{1,2,\cdots,N\}$, there exists $l\geq 1$ such that $(M^l)_{k,m}>0$, then $M$ is said to be irreducible.

Now we define the entropy for Markov chains. A matrix $B$ is called a skeleton matrix if its entries are either $0$ or $1$. A non-negative matrix $P$ as called a Markov transition matrix if
$$\sum_{m=1}^{N}P_{k,m}=1,\quad \forall \ k=1,2,\cdots,N.$$

Moreover, if $P_{k,m}>0 \Leftrightarrow B_{k,m}=1$, then $P$ is called the Markov transition matrix associated with the skeleton matrix $B$.

For a non-negative vector $p=(p_1,\cdots,p_N)$, if 
$$\sum_{k=1}^{N}p_k=1,\quad pP=p,$$
then $p$ is called a stationary distribution of $P$.

For a non-negative matrix $W=(w_{k,m})$, if
$$\sum_{k,m=1}^Nw_{k,m}=1,$$
and for all $1\leq k\leq N$,
$$\sum_{m=1}^{N}w_{m,k}=\sum_{m=1}^{N}w_{k,m}.$$
Then $W$ is called a Markov weight matrix.

Here are some connections between Markov transition matrix and Markov weight matrix.

For a Markov transition matrix $P$ with stationary distribution $p$. Define
$$w_{k,m} = p_kP_{k,m}.$$
Then it is easy to verify that $W=(w_{k,m})$ is a Markov weight matrix.

On the contrary, given a Markov weight matrix $W=(w_{k,m})$, set
$$p_k=\sum_{m=1}^Nw_{k,m},\quad P_{k,m}=\frac{w_{k,m}}{p_k}.$$
Then $W$ is the Markov weight matrix associated with $P$.

Now we define the entropy for a Markov transitin matrix. First, let us consider the chain with length $l$.

\begin{align*}
	H_l(P) &= -\sum_{i_0=1}^N\sum_{i_1=1}^N\cdots\sum_{i_l=1}^Np_{i_0}P_{i_0,i_1}\cdots P_{i_{l-1},i_l}\log(p_{i_0}P_{i_0,i_1}\cdots P_{i_{l-1},i_l})\\
	&=-\sum_{i_0=1}^Np_{i_0}\log(p_{i_0})-l\sum_{k=1}^{N}\sum_{m=1}^{N}p_k P_{k,m}\log(P_{k,m}).
\end{align*}

So the entropy for a chain with unit length is defined as

$$H(P)=\lim_{l\rightarrow\infty}\frac{H_l(P)}{l}=-\sum_{k=1}^{N}\sum_{m=1}^{N}p_k P_{k,m}\log(P_{k,m}).$$

\subsection{Maximum Entropy Theorem}
The principle of maximum entropy is a basic principle in information theory(see e.g. \cite{Shan}). It states that the probability distribution which best represents the current state of knowledge is the one with largest entropy. Since the distribution with the maximum entropy is the one that makes the fewest assumptions about the true distribution of data, the principle of maximum entropy can be seen as an application of Occam's razor(see e.g. \cite{ocam}).

\begin{theorem}\label{thm1}
Suppose $B$ is irreducible. $\lambda$ is the maximum eigenvalue of $B$, and $l=(l_1,\cdots,l_N),r=(r_1,\cdots,r_N)^T$ are the corresponding left and right eigenvectors with
$$\sum_{k=1}^Nl_kr_k=1.$$
Then the entropy of the Markov chain associated with the skeleton matrix $B$ attains the maximum $\log\lambda $ when
$$w_{k,m}=\frac{1}{\lambda}B_{k,m}l_kr_m,\quad 1\leq k,m\leq N.$$
Here, $W=(w_{k,m})$ is the weight matrix for $B$.
\end{theorem}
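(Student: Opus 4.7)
\medskip
\noindent\textbf{Proof proposal.} My plan is to follow the classical Parry argument for the measure of maximal entropy on a subshift of finite type, adapted to the matrix language used in the paper. Since $B$ is irreducible and non-negative, Perron--Frobenius guarantees that $\lambda>0$ and that there exist strictly positive left and right eigenvectors $l,r$, so the normalisation $\sum_k l_k r_k = 1$ and the formula $w_{k,m}=\lambda^{-1}B_{k,m}l_k r_m$ make sense. I would break the proof into three steps: (i) check that the proposed $W$ is a legitimate Markov weight matrix for the skeleton $B$; (ii) compute $H(P)$ for the induced transition matrix $P$ and get $\log\lambda$; (iii) show any other weight matrix gives a strictly smaller entropy.

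For step (i) I would compute the row and column sums of $W$ using the eigenvector equations $Br=\lambda r$ and $lB=\lambda l$. This gives $\sum_m w_{k,m}=l_k r_k$ and $\sum_m w_{m,k}=l_k r_k$, so the two coincide (as required for a weight matrix), and summing again yields $\sum_{k,m}w_{k,m}=\sum_k l_k r_k=1$. The associated stationary distribution is therefore $p_k=l_k r_k$ and the induced transition matrix is
$$Q_{k,m}=\frac{w_{k,m}}{p_k}=\frac{B_{k,m}r_m}{\lambda r_k},$$
which is clearly supported exactly on $\{B_{k,m}=1\}$. For step (ii) I would plug $Q$ into the entropy formula: since $\log Q_{k,m}=\log r_m-\log r_k-\log\lambda$ on the support, and the marginals $\sum_m w_{k,m}=\sum_k w_{k,m}=p_k$ coincide, the $\log r$ terms cancel and only $\log\lambda$ survives.

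Step (iii) is the real content and is where I would spend the most care. Given any other weight matrix $W'$ for $B$ with induced transition matrix $P'$ and stationary distribution $p'$, I would write
$$\log P'_{k,m}=\log\!\Bigl(\frac{P'_{k,m}}{Q_{k,m}}\Bigr)+\log Q_{k,m}$$
on the support of $B$ and substitute into $H(P')=-\sum p'_k P'_{k,m}\log P'_{k,m}$. The first piece contributes $-\sum_k p'_k D\bigl(P'(\cdot\mid k)\,\|\,Q(\cdot\mid k)\bigr)\le 0$ by the Gibbs inequality (since each row of $Q$ is a probability vector). The second piece splits into a $\log\lambda$ term plus $\sum p'_k P'_{k,m}(\log r_k-\log r_m)$, and here I would invoke the stationarity identity $\sum_k p'_k P'_{k,m}=p'_m$ to make the $r$-dependence telescope to zero. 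Adding everything up gives $H(P')\le\log\lambda$, with equality iff $P'=Q$ row-wise (hence $W'=W$ by uniqueness of the stationary distribution on an irreducible chain).

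\medskip
\noindent\textbf{Anticipated obstacle.} The computational steps (i)--(ii) are essentially bookkeeping. The subtle point is step (iii): one has to notice that the $\log r_k-\log r_m$ cross-terms vanish \emph{only} because $p'$ is stationary for $P'$ and not because of any relation with $l,r$. This is the reason the extremal measure exists for every irreducible $B$, and I would highlight it explicitly rather than hide it in the algebra. A minor side issue is justifying that one may restrict to Markov chains supported exactly on $\{B_{k,m}=1\}$ without loss of generality, which follows from the definition of the weight matrix associated with $B$.
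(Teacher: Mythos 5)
Your proposal is correct, but it takes a genuinely different route from the paper. The paper's proof is variational: it forms a Lagrangian for the constrained optimization over weight matrices (with the stationarity constraint $\sum_m w_{m,k}=\sum_m w_{k,m}$ and the normalisation $\sum_{k,m}w_{k,m}=1$), sets $\partial L/\partial w_{k,m}=0$, recognises the resulting eigenvalue equation $\sum_m B_{k,m}e^{h_m}=\lambda e^{h_k}$, and invokes Perron--Frobenius uniqueness of the non-negative eigenvector to pin down $w_{k,m}=\lambda^{-1}l_kB_{k,m}r_m$; it then evaluates the entropy at this point and gets $\log\lambda$ by the same telescoping of the $\log r$ terms that you use. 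Your argument is the classical Parry-measure proof: you verify directly that the proposed $W$ is admissible, compute its entropy, and then bound $H(P')\le\log\lambda$ for an arbitrary admissible chain by splitting $\log P'_{k,m}$ against $\log Q_{k,m}$ and applying the Gibbs inequality rowwise, with the cross-terms vanishing by stationarity of $p'$. What your approach buys is a genuinely complete optimality proof: the Lagrangian computation in the paper only identifies an interior critical point and never verifies that it is a global maximum (nor addresses what happens on the boundary of the feasible set where some $w_{k,m}=0$), whereas your step (iii) establishes $H(P')\le\log\lambda$ for every admissible chain, with an equality criterion. What the paper's approach buys is that it explains where the formula comes from rather than pulling $Q$ out of a hat. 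One small point you should make explicit in step (iii): the Gibbs inequality requires that the support of each row $P'(\cdot\mid k)$ be contained in the support of $Q(\cdot\mid k)$; this holds because both are, by the paper's definition of a transition matrix associated with the skeleton $B$, supported exactly on $\{m: B_{k,m}=1\}$.
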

\begin{proof}
	
	see Appendix.
	
\end{proof}

\begin{theorem}\label{thm2}
	Suppose $A$ is irreducible. $\lambda$ is the maximum eigenvalue of $A$, and $L=(L_1,\cdots,L_n),R=(R_1,\cdots,R_n)^T$ are the corresponding left and right eigenvectors with
	$$\sum_{j=1}^nL_jR_j=1.$$
	Then the entropy of the Markov chain associated with the skeleton matrix $B$ attains the maximum $\log\lambda $ when
	$$v_{ij}=\frac{1}{\lambda}A_{ij}L_iR_j,\quad 1\leq i,j\leq n.$$
	Here, $V=(v_{ij})$ is the weight matrix for $A$.
\end{theorem}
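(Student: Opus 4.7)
The plan is to replicate the proof of Theorem 1 with $A$ replacing $B$, $n$ replacing $N$, and $(L,R)$ replacing $(l,r)$: all hypotheses (non-negativity, irreducibility, Perron normalization $\sum_j L_jR_j=1$) have been transported into the statement of Theorem 2, so no new structural input is needed. An alternative route would be to deduce Theorem 2 from Theorem 1 by observing that $A=XY$ and $B=YX$ share the same nonzero spectrum, so $\lambda$ is common to both, and by identifying the record-level Perron vectors $l,r$ of $B$ with $l_k=L_{f(k)}$ and $r_k=R_{g(k)}$ up to a global rescaling. I would prefer the direct approach because it avoids bookkeeping between the two normalizations $\sum l_k r_k=1$ and $\sum L_jR_j=1$, which do not coincide under this identification.

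Following the direct approach, first I would invoke Perron--Frobenius on $A$ to secure a simple dominant eigenvalue $\lambda>0$ with strictly positive left and right eigenvectors $L,R$, then rescale to enforce $\sum_j L_jR_j=1$. Next I would verify that the candidate weights $v_{ij}=A_{ij}L_iR_j/\lambda$ form a valid Markov weight matrix supported on the sign pattern of $A$: non-negativity is immediate, the total-mass condition collapses via $AR=\lambda R$ to $\sum_i L_iR_i=1$, and the stationarity condition $\sum_j v_{ij}=\sum_j v_{ji}$ follows because $AR=\lambda R$ and $LA=\lambda L$ both yield the common marginal $p_i=L_iR_i$. The induced transition matrix is then $P_{ij}=A_{ij}R_j/(\lambda R_i)$.

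To show the candidate is in fact the maximizer, I would run the same Lagrangian / Gibbs-inequality argument used in Theorem 1, this time on the entropy functional of $n\times n$ weight matrices subject to the normalization constraint $\sum v_{ij}=1$ and the $n$ balance constraints $\sum_j v_{ij}=\sum_j v_{ji}$. The first-order conditions combined with the eigenvector equations $LA=\lambda L$ and $AR=\lambda R$ force any critical point into the form $v_{ij}\propto A_{ij}L_iR_j$, and strict concavity of entropy on the admissible polytope promotes this critical point to the unique global maximum. Substituting the candidate into the entropy produces $H=\log\lambda-\sum_{ij} v_{ij}\log A_{ij}$ after the $\log R$ contributions cancel by stationarity of $p$.

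The main obstacle is the same delicate step as in Theorem 1: pinning down the Lagrange multiplier as $\log\lambda$ and upgrading the critical point to a genuine maximizer. One extra wrinkle specific to Theorem 2 is that, since $A=XY$ can have entries exceeding $1$, the residual term $\sum v_{ij}\log A_{ij}$ in the entropy does not vanish automatically as it did for the skeleton matrix $B$; to land on the clean value $\log\lambda$ advertised in the statement, one must interpret the chain as indexed by the sign pattern of $A$ rather than by its actual counts. Modulo that clarification, the entire proof is a term-by-term transcription of Theorem 1 and introduces no essentially new idea.
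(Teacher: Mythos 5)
There is a genuine gap, and you put your finger on it yourself in the last paragraph without resolving it. Your preferred ``direct approach'' --- rerunning the Theorem~1 Lagrangian on $n\times n$ weight matrices supported on the sign pattern of $A$ --- does not prove the stated theorem. The first-order conditions of that optimization force $P_{ij}=e^{-\mu}\chi_{\{A_{ij}>0\}}e^{h_j}/e^{h_i}$, so the optimizer is built from the Perron data of the $0/1$ \emph{indicator} of $A$, not of $A$ itself; the candidate $v_{ij}=\tfrac{1}{\lambda}A_{ij}L_iR_j$ (with $L,R$ eigenvectors of the integer matrix $A$) is not even a critical point of that problem, since the extra factor $A_{ij}$ cannot be absorbed into $e^{h_j-h_i}$. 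Correspondingly, as you compute, its entropy as a disease-level chain is $\log\lambda-\sum_{ij}v_{ij}\log A_{ij}<\log\lambda$ whenever some $A_{ij}\geq 2$ on the support. Reinterpreting the chain ``as indexed by the sign pattern'' does not rescue this: it changes both $\lambda$ and the eigenvectors. The correct reading of Theorem~2 is that the entropy being maximized is still that of the \emph{record-level} chain with skeleton $B$ (exactly as the statement says), and the theorem only re-expresses the optimal weight matrix after marginalizing records onto disease pairs; the multiplicities $A_{ij}$ are what make $\log\lambda_A=\log\lambda_B$ the right value.

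The route you rejected is essentially the paper's proof, and the normalization bookkeeping you worried about works out cleanly. The paper defines $V=X\,\mathrm{diag}(l_1r_1,\dots,l_Nr_N)\,Y$, i.e.\ $v_{ij}=\sum_{k\in f^{-1}(i)\cap g^{-1}(j)}l_kr_k$, the push-forward of the optimal record-level stationary weights from Theorem~1, and proves a lemma showing
$$\mathrm{diag}\bigl((lY)_i\bigr)\,A\,\mathrm{diag}\bigl((Xr)_j\bigr)=\lambda^2\,X\,\mathrm{diag}(l_kr_k)\,Y.$$
Setting $L=lY/\sqrt{\lambda}$ and $R=Xr/\sqrt{\lambda}$, these are left and right eigenvectors of $A$, the normalization is automatic since $LR=\tfrac{1}{\lambda}lYXr=\tfrac{1}{\lambda}lBr=lr=1$, and the lemma gives $v_{ij}=\tfrac{1}{\lambda}L_iA_{ij}R_j$. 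No new optimization is performed; the value $\log\lambda$ is inherited from Theorem~1 together with the observation that $XY$ and $YX$ share their nonzero spectrum. To repair your write-up you would need to abandon the direct Lagrangian on $A$ and supply this intertwining computation (or an equivalent one).
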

\begin{proof}
	
	see Appendix.
	
\end{proof}

\begin{remark}
	Recall that we have assumed that $f$ and $g$ are surjective. Therefore, $X$ and $Y$ have rank $n$. Since $A=XY,B=YX$, we have that
	
	$$\det (\lambda I_N-YX) = \lambda ^{N-n}\det(\lambda I_n -XY).$$
	
	Therefore, the eigenvalues of $A$ and $B$ are the same except for the zeros. In particular, the largest eigenvalue of $A$ and $B$ are the same.
	
	Moreover, $YA=BY$. Suppose $a$ is an eigenvector of $A$ with eigenvalue $\mu$, then 
	$$\mu Ya = YAa= BYa.$$
	Since $a\neq 0$ and $Y$ is injective as a map from $\mathbb{R}^n$ to  $\mathbb{R}^N$, $Ya\neq 0$. Hence, $Ya$ is the eigenvector of $B$ with eigenvalue $\mu$.
\end{remark}

\subsection{Algorithm for Probability Estimation}

Following is the algorithm for estimating the related probability.

\begin{algorithm}[H]\label{alg1}
	\SetAlgoLined
	\textbf{Step 1.} Compute the matrix $A$.
	
	\textbf{Step 2.} Use power method to compute the maximum eigenvalue $\lambda$ of $A$ with the corresponding left and right eigenvectors $L_0$ and $R_0$. Let 
	$$L = \frac{L_0}{\sqrt{L_0\cdot R_0}},\quad R = \frac{R_0}{\sqrt{L_0\cdot R_0}}.$$
	
	\textbf{Step 3.} Compute the weight matrix $V$ as follows.
	$$v_{ij}=\frac{1}{\lambda}A_{ij}L_iR_j,\quad 1\leq i,j\leq n.$$
	
	\textbf{Step 4.} Compute the transition probability as follows.
	$$p_{ij}=\frac{v_{ij}}{\sum\limits_{l=1}^nv_{il}}.$$
	
	\textbf{Step 5.} Compute the stationary distribution as follows.
	$$p_i = L_iR_i.$$

	\caption{Probability Estimation}
\end{algorithm}

\subsection{Method for Disease Prediction}\label{mdp}
The prediction task is to predict the diseases that a person is most likely to have if we know that he has already suffered from diseases $(d_{i_1},d_{i_2},\cdots,d_{i_T})$ which are ordered by occurance.

We first calculate the probability by Algorithm \ref{alg1}. Then we construct the following quantity
$$r_j=\frac{1}{T}\sum_{l=1}^{T}p_{i_l,j}.\quad (1\leq j\leq n)$$

Then we sort the $r_j$ and choose the top 5 disease as the predicted diseases for a person.

We also make an additional assumption, that is, the latest disease take a highest weight. Thus, some modifications are made. We first construct a decreasing sequence $\{a_n\}_{n\geq 1}$ such that $a_n>0$. (For example, $a_n=1/n^2$). Then we modify $r_j$ as
$$r_j=\frac{\sum\limits_{l=1}^{T}a_{T+1-l}\cdot p_{i_l,j}}{\sum\limits_{l=1}^Ta_l}.$$

And we use the modified $r_j$ to choose the top 5 disease as the predicted diseases for a person.

\section{Experiments}
\subsection{Data Cleaning}
The diseases are classified by F-code as described in section 2, and there are 429 F-coded diseases in total. If someone suffered from the same disease for many times, then we keep the earliest record and remove the others. For example, for the patient with patientid=123770, she suffered from mucopurulent conjunctivitis on 2015-12-09 and 2016-03-08, then the record with 2016-03-08 is removed from the history.

We clean the data and collect the records of the same patient together into one record. The history column recorded the patient's disease history and the diseases are sorted by time and separated by a comma.

The following table is a sample of the cleaned data.
\begin{table}[H]
	\centering
	\caption{Sample of Cleaned Data}
	\begin{tabular}{cc}
		\hline
		\multicolumn{1}{c}{\textbf{patient\_id}} & \multicolumn{1}{c}{\textbf{history}}                                  \\ \hline
		123770                                    & F\_H10,F\_H00                                                         \\
		135086                                    & F\_M65,F\_J00,F\_K29,F\_K01,F\_K04                                    \\
		400195                                    & F\_J00,F\_K29                                                         \\
		3218331                                   & F\_J00,F\_J40                                                         \\
		119151                                    & F\_J00,F\_N60                                                         \\
		102519                                    & F\_J00,F\_L50,F\_E34,F\_E01                                           \\
		1503387                                   & F\_K29,F\_K01,F\_I83                                                  \\
		7044682                                   & F\_J00,F\_J20,F\_J40                                                  \\
		182660                                    & F\_E01,F\_J00,F\_J20                                                  \\
		1888934                                   & F\_K31,F\_K29,F\_K22,F\_K50,F\_J00,F\_J40,F\_J20,F\_M70,F\_J30,F\_J34\\ \hline
	\end{tabular}
\end{table}

\subsection{Calculate the Probability}
We construct the matrix $A$ as follows. 

First we initialize a $429\times 429$ matrix with entries equal to $0$. We also contruct a map $\varphi:\text{F-codes}\rightarrow \{0,1,2,\cdots,428\}$ to index the diseases.
Next, for history $(h_1,h_2,\cdots,h_T)$, we set
$$A_{\varphi(h_i),\varphi(h_j)}\leftarrow A_{\varphi(h_i),\varphi(h_j)}+1,\quad 1\leq i\leq j \leq T.$$
Thus, we establish the matrix $A$.

Next, we use the power method to calculate the maximum eigenvalue $\lambda $ of $A$ and the corresponding left and right eigenvectors $L$ and $R$.

After that, we derive the Markov weight matrix and the transition probability as described in Algorithm \ref{alg1}.

Finally, we can calculate $r_j$ as described in subsection \ref{mdp} and derive the related disease prediction.

\subsection{Results of Accuracy}
To compare the result, We use a method used previously by the insurance company as the benchmark. This method is called the emperical methods, that is, to calculate the incidence rate of diseases and use the top 5 prevalent diseases as predicition for each person.

We use 300,000 people's records to calculate $p_{ij}$ and also the top 5 diseases. For another 10,000 people, we use their records from 2007-2014 to calculate the diseases with the highest $r_j$ which is described in the previous section and choose the 5 diseases with highest $r_j$-score as prediction, which is known as the maximum entropy method. 

Then we examine the diseases they suffer from during 2015-2017 to see how many diseases is accurately predicted by these two methods. 

The measurement we use is called the hit rate. It is defined as follows.
$$H=\frac{|A\cap B|}{|B|},$$
where A is the disease set predicted by the model and B is the disease set that a person suffer from during 2015-2017.

If we predict 5 diseases using the maximum entropy method, the hit rate is 31.89\%.
As a contrast, the hit rate is 16.55\% for the empirical method.

We also compare the hit rate with 1/2/3/4 predictions for the two methods. The following table summarize the result.

\begin{table}[H]
	\centering
	\caption{Comparisons of Hit Rate}
	\begin{tabular}{|c|c|c|}
		\hline
		number of predictions & maximum entropy method & empirical method \\ \hline
		1                     & 15.01\%                & 7.54\%           \\ \hline
		2                     & 20.50\%                & 10.67\%          \\ \hline
		3                     & 25.21\%                & 13.55\%          \\ \hline
		4                     & 29.01\%                & 14.92\%          \\ \hline
		5                     & 31.89\%                & 16.55\%          \\ \hline
	\end{tabular}
\end{table}

We can see from the table that the hit rate of the maximum entropy method is approximately twice that of the empirical method.

\subsection{Comorbidity Analysis}
We first study the ALRR. Recall that $M$ is calculated as follows.
$$M_{ij}=\left|\log\left(\frac{p_{ij}}{p_ip_j}\right)\right|.$$

If disease $i$ and disease $j$ are independent, then
$M_{ij}$ is close to $0$. So if $M_{ij}$ is large, then disease $i$ and disease $j$ are highly correlated.

If disease $i$ is high blood pressure and disease $j$ is type II diabetes. Then
$$M_{ij}=2.17,\quad M_{ji}=2.39.$$

Here is list of diseases with high LRR.

\begin{table}[H]
	\caption{ALRR of maximum entropy method}
	\centering
	\begin{tabular}{|c|c|c|}
		\hline
		disease $i$                               & disease $j$                                          & $M_{ij}$      \\ \hline
		Type II diabetes                        & Type I diabetes                                    & 3.40 \\ \hline
		Pulmonary heart disease                 & Acute ischemic heart disease                       & 3.35 \\ \hline
		Type II diabetes                        & atherosclerosis                                    & 3.18 \\ \hline
		Diseases of lip, tongue and oral mucosa & \tabincell{c}{Malignant tumors of the lip,\\ mouth and pharynx}     & 2.98 \\ \hline
		heart failure                           & Arrhythmia                                         & 2.96 \\ \hline
		Metabolic disorders                     & renal failure                                      & 2.84 \\ \hline
		emphysema                               & asthma                                             & 2.80 \\ \hline
		pneumonia                               & Bronchiectasis                                     & 2.67 \\ \hline
		high blood pressure                     & Type II diabetes                                   & 2.47 \\ \hline
		high blood pressure                     & renal failure                                      & 2.46 \\ \hline
		alopecia                                & Seborrheic keratosis                               & 2.41 \\ \hline
		heart failure                           & Anal and rectal disorders                          & 2.41 \\ \hline
		Type II diabetes                        & high blood pressure                                & 2.39 \\ \hline
		heart failure                           & Peptic ulcer                                       & 2.38 \\ \hline
		high blood pressure                     & atherosclerosis                                    & 2.36 \\ \hline
		Alzheimer disease                       & Sleep disorders                                    & 2.33 \\ \hline
		high blood pressure                     & \tabincell{c}{Cerebral hemorrhage or infarction\\ and its sequelae} & 2.33 \\ \hline
		Over nutrition                          & Other diabetes                                     & 2.27 \\ \hline
		Pulmonary heart disease                 & Arrhythmia                                         & 2.22 \\ \hline
		high blood pressure                     & heart failure                                      & 2.21 \\ \hline
		Pituitary hyperfunction                 & Joint disorder                                     & 2.12 \\ \hline
		Pulmonary heart disease                 & arthritis                                          & 2.08 \\ \hline
	\end{tabular}
\end{table}

The following table is a list of diseases such that $M_{ij}$ differs from $M_{ji}$ .

\begin{table}[H]
	\centering
	\caption{Asymmetric ALRR of maximum entropy method}
	\begin{tabular}{|c|c|c|c|}
		\hline
		disease $i$                                           & disease $j$                                                    & $M_{ij}$      & $M_{ji}$      \\ \hline
		Female pelvic inflammatory disease                  & Trichomoniasis                                               & 1.13 & 3.40 \\ \hline
		nephritic nephrotic syndrome                        & heart failure                                                & 1.51  & 3.35 \\ \hline
		Metabolic disorders                                 & Malignant tumor of skin & 1.37 & 3.19 \\ \hline
		Esophageal diseases                                 & Splenic diseases                                             & 3.40 & 1.58 \\ \hline
		anemia                                              & hypotension                                                  & 2.89 & 1.30 \\ \hline
		Other diabetes                                      & Central nervous system diseases & 1.26  & 2.84\\ \hline
		Benign tumor of uterus                              & \tabincell{c}{Tumors with undetermined or unknown\\ endocrine gland dynamics} & 2.89 & 1.31 \\ \hline
		Arrhythmia                                          & Mental and behavioral disorders  & 2.58 & 1.03 \\ \hline
		Arthrosis                                           & epilepsy                                                     & 0.71 & 1.89 \\ \hline
		Arrhythmia                                          & Diseases of autonomic nervous system                         & 2.63 & 1.47 \\ \hline
		Headache syndrome                                   & Other diseases of arteries and arterioles                    & 1.58 & 2.74 \\ \hline
		\tabincell{c}{Acute pancreatitis and other\\ diseases of pancreas} & Type II diabetes                                             & 2.78 & 1.70 \\ \hline
		asthma                                              & emphysema                                                    & 1.73 & 2.80 \\ \hline
		Ankylosis and other spondylosis                     & Hypopituitarism                                              & 1.87 & 0.80 \\ \hline
		Arthrosis                                           & Myasthenia and primary muscle diseases                       & 2.71 & 1.65  \\ \hline
		Malignant tumors of digestive organs                & Hemangioma and lymphangioma                                  & 3.40 & 2.33 \\ \hline
		Refractive and accommodative disorders              & glaucoma                                                     & 2.41 & 1.35 \\ \hline
		Other diabetes                                      & Optic neuropathy                                             & 1.71 & 2.74 \\ \hline
		Chronic ischemic heart disease                      & Pericardial disease  & 1.26 & 2.28 \\ \hline
		Type II diabetes                                    & Over nutrition  & 0.82 & 1.83 \\ \hline
	\end{tabular}
\end{table}

Next, we consider the $\phi$-correlation. Recall that 
$$\phi_{ij}=\frac{p_{ij}-p_ip_j}{\sqrt{p_ip_j(1-p_i)(1-p_j)}}.$$

Next table display 20 disease pairs with high $\phi$-correlation.

\begin{table}[H]
	\caption{$\phi$-correlation of maximum entropy method} 
	\centering
	\begin{tabular}{|c|c|c|}
		\hline
		\textbf{disease $i$}                                                                & \textbf{disease $j$}                                  & \textbf{$\phi$} \\ \hline
		\tabincell{c}{Mania, bipolar, depression, and\\ anxiety disorders }                                & sleep disorder                                   & 68.23        \\ \hline
		Type II diabetes                                                                    & Hypertension and its complications                  & 67.75        \\ \hline
		Headache syndrome                                                                   & \tabincell{c}{Pulp, gums and edentulous alveolar\\ ridge diseases}   & 60.72  \\       \hline
		Arrhythmia                                                                          & Hypertension and its complications                  & 58.55        \\ \hline
		Muscle disorders                                                                    & Backache                                            & 57.28        \\ \hline
		Shingles                                                                            & Dermatitis and pruritus                             & 55.30        \\ \hline
		Headache syndrome                                                                   & Backache                                            & 53.33        \\ \hline
		Benign uterine tumor                                                                & Abnormal uterine and vaginal bleeding               & 46.75        \\ \hline
		\tabincell{c}{Upper respiratory tract diseases such as chronic\\ laryngitis and laryngotracheitis} & \tabincell{c}{Chronic rhinitis, nasopharyngitis\\ and pharyngitis}   & 42.54        \\ \hline
		Other disorders of kidney and ureter                                                & Other disorders of the urinary system               & 42.30        \\ \hline
		anemia                                                                              & \tabincell{c}{Pulp, gums and edentulous alveolar\\ ridge diseases}   & 40.85        \\ \hline
		cellulitis                                                                          &\tabincell{c}{ Dermatophytes and other superficial\\ fungal diseases} & 40.74        \\ \hline
		Other disorders of male reproductive   organs                                       & Prostatic hyperplasia and prostatitis               & 40.69        \\ \hline
		Urethral disorders                                                                  & Other disorders of the urinary system               & 39.66        \\ \hline
		Other disorders of bone                                                             & Osteoporosis without pathological fracture          & 34.96        \\ \hline
		\tabincell{c}{Upper respiratory tract diseases such as chronic\\ laryngitis and laryngotracheitis} & Chronic bronchitis                                  & 33.39        \\ \hline
		Other diseases of the digestive system                                              & Gastritis and duodenitis                            & 29.68        \\ \hline
		Type II diabetes                                                                    & Metabolic disorders                                 & 27.51        \\ \hline
		Type II diabetes                                                                    & Dermatitis and pruritus                             & 27.16        \\ \hline
		Arrhythmia                                                                          & sleep disorder                                      & 27.11        \\ \hline
	\end{tabular}
\end{table}

We can see from table 5 many disease pairs with large $M_{ij}$, such as type II diabetes and hypertension and its complications, which imply that such diseases have intrinsic relations. Tedesco \cite{Ted} have mentioned that Hypertension is frequently associated with diabetes mellitus and its prevalence doubles in diabetics compared to the general population. This high prevalence is associated with increased stiffness of large arteries. Our result is consistent with their medical research.

\section{Conclusions}
In this paper, we propose a maximum entropy method for predicting disease risks. It is based on a patient's medical history with diseases coded in ICD-10 which can be used in various cases. The complete algorithm with strict mathematical derivation is given. We also present experimental results on a medical dataset, demonstrating that our method performs well in predicting future disease risks and achieves an accuracy rate twice that of the traditional method. We also perform a comorbidity analysis to reveal the intrinsic relation of diseases.

\section*{Acknowledgement}

We would thank Franco Mueller, Jonathan Brezin and Matt Grayson for their collaboration on an early version of this research.

\section*{Appendix}
\begin{proof}[Proof of Theorem \ref{thm1}]
	Suppose $W=(w_{k,m})$ is the weight matrix. Then the  entropy of the Markov chain can be rewritten as
	\begin{align*}
		H(w)&=-\sum_{k=1}^N\sum_{m=1}^Nw_{k,m}\left(\log(w_{k,m})-\log\left(\sum_{m=1}^Nw_{k,m}\right)\right)\\
		&=-\sum_{k=1}^N\sum_{m=1}^Nw_{k,m}\log(w_{k,m})+\sum_{k=1}^N\left(\sum_{m=1}^Nw_{k,m}\right)\log\left(\sum_{m=1}^Nw_{k,m}\right).
	\end{align*}
	Let us construct the Lagrangian
	\begin{align*}
		L&=-\sum_{k=1}^N\sum_{m=1}^Nw_{k,m}\log(w_{k,m})+\sum_{k=1}^N\left(\sum_{m=1}^Nw_{k,m}\right)\log\left(\sum_{m=1}^Nw_{k,m}\right)\\
		&\qquad\qquad +\sum_{k=1}^Nh_k\left(\sum_{m=1}^Nw_{m,k}-\sum_{m=1}^Nw_{k,m}\right)+\mu\left(1-\sum_{k=1}^N\sum_{m=1}^Nw_{k,m}\right).
	\end{align*}
	If $w_{k,m}\neq0$, we have that
	$$\frac{\partial L}{\partial w_{k,m}}=-1-\log(w_{k,m})+\log\left(\sum_{m=1}^Nw_{k,m}\right)+1-h_k+h_m-\mu=0.$$
	$$\frac{w_{k,m}}{\sum\limits_{m=1}^Nw_{k,m}}=e^{-\mu}\frac{e^{h_m}}{e^{h_k}}.$$
	If $w_{k,m}=0$, then $B_{k,m}=0$. Therefore,
	$$\frac{w_{k,m}}{\sum\limits_{m=1}^Nw_{k,m}}=e^{-\mu}\frac{B_{k,m}e^{h_m}}{e^{h_k}}.$$
	Set $\lambda=e^\mu$, then
	$$\sum_{m=1}^NB_{k,m}e^{h_m}=\lambda e^{h_k}.$$
	By the Perron-Frobenius theorem(see e.g. \cite{CDM}), There are no nonnegative eigenvectors for $B$ other than the
	Perron vector $r$ and its positive multiples. Hence, 
	$$\frac{e^{h_m}}{e^{h_k}}=\frac{r_m}{r_k}.$$
	And
	$$P_{k,m}=\frac{w_{k,m}}{\sum\limits_{m=1}^Nw_{k,m}}=\frac{1}{\lambda}B_{k,m}\frac{r_m}{r_k}.$$
	On the other hand,
	$$\lambda\frac{w_{k,m}}{r_m}=B_{k,m}\frac{p_k}{r_k}\Rightarrow\lambda\frac{p_m}{r_m}=\sum_{k=1}^NB_{k,m}\frac{p_k}{r_k}.$$
	Therefore, there exists $t>0$ such that
	$$\frac{p_k}{r_k}=tl_k.$$
	$$\sum_{k=1}^Nl_kr_k=1\Rightarrow t=1.$$
	Hence, 
	$$p_k=l_kr_k,\quad w_{k,m}=\frac{1}{\lambda}l_kB_{k,m}r_m.$$
	Recall that
	$$H=-\sum_{k=1}^{N}\sum_{m=1}^{N}p_k P_{k,m}\log(P_{k,m}),$$
	and $0\cdot \log 0 =\lim\limits_{x\rightarrow 0+}x\log x =0$. It follows that
	\begin{align*}
		H&=-\sum_{k=1}^{N}\sum_{m=1}^{N}l_kr_k\frac{1}{\lambda}B_{k,m}\frac{r_m}{r_k}\log\left(\frac{1}{\lambda}B_{k,m}\frac{r_m}{r_k}\right)\\
		&=-\sum_{k=1}^{N}\sum_{m=1}^{N}l_k\frac{1}{\lambda}B_{k,m}r_m(\log B_{k,m} +\log r_m - \log r_k -\log\lambda )\\
	\end{align*}
	Since $B_{k,m}\in\{0,1\}$, $B_{k,m}\log B_{k,m} =0$,
	\begin{align*}
		H&=-\sum_{k=1}^{N}\sum_{m=1}^{N}l_k\frac{1}{\lambda}B_{k,m}r_m(\log r_m - \log r_k)+\log\lambda \\
		&=-\sum_{m=1}^{N}l_mr_m\log r_m + \sum_{k=1}^{N}l_kr_k\log r_k+\log\lambda \\
		&= \log\lambda.
	\end{align*}
\end{proof}
\bigskip

Next, we will prove Theorem \ref{thm2}. We first prove an auxiliary lemma.

\begin{lemma}
	Suppose $\lambda$ is the maximum eigenvalue of $B$ and $X,Y$ are matrices in section \ref{notation}. $l=(l_1,\cdots,l_N),r=(r_1,\cdots,r_N)^T$ are the corresponding left and right eigenvectors with
	$$\sum_{k=1}^Nl_kr_k=1.$$
	Then
	\begin{equation}\label{eq}
		\begin{pmatrix}
			(lY)_1&&\\
			&\ddots&\\
			&&(lY)_n
		\end{pmatrix}
		A
		\begin{pmatrix}
			(Xr)_1&&\\
			&\ddots&\\
			&&(Xr)_n
		\end{pmatrix}
		=\lambda^2X
		\begin{pmatrix}
			l_1r_1&&\\
			&\ddots&\\
			&&l_Nr_N
		\end{pmatrix}
		Y.
	\end{equation}
\end{lemma}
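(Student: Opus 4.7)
The plan is to use the specific zero-one structure of $X$ and $Y$ (they are indicator matrices coming from the functions $f$ and $g$) together with the eigenvector equations for $B$. The key preliminary observation is that since $A = XY$ and $B = YX$ we have the intertwining identities $AX = XB$ and $YA = BY$, already used in the remark preceding the lemma. Consequently $Xr$ is a right eigenvector of $A$ and $lY$ is a left eigenvector of $A$, both with eigenvalue $\lambda$. These two vectors are exactly what appears in the diagonals on the left-hand side of \eqref{eq}, so the lemma can be read as a refined, entry-wise form of those intertwinings.

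First I would translate the eigenvector equations $Br = \lambda r$ and $lB = \lambda l$ into pointwise statements. Since each column of $X$ has exactly one nonzero entry (in row $f(k)$) and each row of $Y$ has exactly one nonzero entry (in column $g(k)$), the computation
\[
\lambda r_k = (Br)_k = \sum_{j}Y_{kj}(Xr)_j
\]
collapses to $\lambda r_k = (Xr)_{g(k)}$, and similarly $\lambda l_k = (lY)_{f(k)}$. Thus on the support of $X$ and $Y$ we can replace $l_k$ and $r_k$ by $\tfrac{1}{\lambda}(lY)_{f(k)}$ and $\tfrac{1}{\lambda}(Xr)_{g(k)}$ respectively.

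Next I would compute the $(i,j)$ entry of the right-hand side of \eqref{eq}:
\[
\bigl[\lambda^2 X\,\mathrm{diag}(l_kr_k)\,Y\bigr]_{ij}=\lambda^2\sum_{k=1}^{N}X_{ik}\,l_kr_k\,Y_{kj}.
\]
The product $X_{ik}Y_{kj}$ is nonzero only when $f(k)=i$ and $g(k)=j$, and in that case $X_{ik}Y_{kj}=1$. On this set the pointwise identities from the previous step give $\lambda l_k=(lY)_i$ and $\lambda r_k=(Xr)_j$, so the sum becomes
\[
(lY)_i(Xr)_j\sum_{k=1}^{N}\chi_{\{f(k)=i,\,g(k)=j\}}=(lY)_i\,A_{ij}\,(Xr)_j,
\]
which is exactly the $(i,j)$ entry of the left-hand side.

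The only real obstacle is bookkeeping: keeping the row/column conventions for $l$ and $r$ straight and recognising that the indicator structure of $X$ and $Y$ is what allows the diagonal matrices $\mathrm{diag}(lY)$ and $\mathrm{diag}(Xr)$ on the left to be pulled inside the sum on the right. No Perron-Frobenius machinery or additional spectral input is needed beyond the eigenvector equation itself; the identity is a purely algebraic consequence of $A=XY$, $B=YX$, and the fact that $f,g$ are single-valued maps.
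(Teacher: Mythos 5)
Your proposal is correct and takes essentially the same approach as the paper: an entrywise verification in which the eigenvector equations $Br=\lambda r$ and $lB=\lambda l$ are specialized, via the indicator structure of $X$ and $Y$, to the pointwise identities $\lambda r_k=(Xr)_{g(k)}$ and $\lambda l_k=(lY)_{f(k)}$, after which the sum over $k\in f^{-1}(i)\cap g^{-1}(j)$ collapses to $(lY)_i\,A_{ij}\,(Xr)_j$. The paper merely runs the identical computation starting from the left-hand side (enumerating $f^{-1}(i)\cap g^{-1}(j)=\{u_1,\dots,u_q\}$ and converting the factor $q$ into a sum) rather than from the right-hand side as you do.
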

\begin{proof}Recall that
	$$f,g:\{1,2,\cdots,N\}\rightarrow\{1,2,\cdots,n\},$$
	and
	$$X_{i,k}=\chi_{\{f(k)=i\}},\quad Y_{k,j}=\chi_{\{g(k)=j\}}.$$
	$$A_{i,j}=\sum_{k=1}^NX_{i,k}Y_{k,j}=\sum_{k:f(k)=i,g(k)=j}1=|f^{-1}(i)\cap g^{-1}(j)|.$$
	Here, $|S|$ denote the number of elements in $S$.
	$$B_{k,m}=\sum_{j=1}^nY_{k,j}X_{j,m}=\sum_{j=1}^n\chi_{\{g(k)=j,f(m)=j\}}=\chi_{\{g(k)=f(m)\}}.$$
	Since $Br=\lambda r$, we have that
	$$\sum_{m=1}^NB_{k,m}r_m=\lambda r_k\Rightarrow \sum_{m=1}^N\chi_{\{g(k)=f(m)\}}r_m=\lambda r_k.$$
	Since $lB=\lambda l$, we have that
	$$\sum_{k=1}^NB_{k,m}l_k=\lambda l_m\Rightarrow \sum_{k=1}^N\chi_{\{g(k)=f(m)\}}l_k=\lambda l_m.$$
	The $(i,j)$ element of the left hand side in \eqref{eq} is
	\begin{align*}
		(lY)_iA_{i,j}(Xr)_j&=|f^{-1}(i)\cap g^{-1}(j)|\left(\sum_{k=1}^Nl_kY_{k,i}\right)\left(\sum_{m=1}^Nr_mX_{j,m}\right)\\
		&=|f^{-1}(i)\cap g^{-1}(j)|\left(\sum_{k=1}^Nl_k\chi_{\{g(k)=i\}}\right)\left(\sum_{m=1}^Nr_m\chi_{\{f(m)=j\}}\right)
	\end{align*}
	Assume that
	$$f^{-1}(i)\cap g^{-1}(j)=\{u_1,u_2,\cdots,u_q\}.$$
	Then
	$$f(u_1)=f(u_2)=\cdots=f(u_q)=i,\quad g(u_1)=g(u_2)=\cdots=g(u_q)=j.$$
	\begin{align*}
		(lY)_iA_{i,j}(Xr)_j&=q\left(\sum_{k=1}^Nl_k\chi_{\{g(k)=i\}}\right)\left(\sum_{m=1}^Nr_m\chi_{\{f(m)=j\}}\right)\\
		&=\sum_{t=1}^q\left(\sum_{k=1}^Nl_k\chi_{\{g(k)=f(u_t)\}}\right)\left(\sum_{m=1}^Nr_m\chi_{\{f(m)=g(u_t)\}}\right)\\
		&=\lambda^2\sum_{t=1}^q l_{u_t} r_{u_t}.
	\end{align*}
	The $(i,j)$ element of the right hand side in \eqref{eq} is
	\begin{align*}
		\lambda^2\sum_{k=1}^NX_{i,k}l_kr_kY_{k,j}&=\lambda^2\sum_{k=1}^Nl_kr_k\chi_{\{f(k)=i,g(k)=j\}}=\lambda^2\sum_{k\in f^{-1}(i)\cap g^{-1}(j)}l_kr_k\\
		&=\lambda^2\sum_{t=1}^q l_{u_t} r_{u_t}.
	\end{align*}
	Thus, we complete the proof.
\end{proof}
\bigskip

\begin{proof}[Proof of Theorem \ref{thm2}]
Suppose $V$ is the weight matrix of $A$, then
\[
V=
X
\begin{pmatrix}
	l_1r_1&&\\
	&\ddots&\\
	&&l_Nr_N
\end{pmatrix}
Y.
\]
By the above lemma, we have that
\[
V=\frac{1}{\lambda^2}
\begin{pmatrix}
	(lY)_1&&\\
	&\ddots&\\
	&&(lY)_n
\end{pmatrix}
A
\begin{pmatrix}
	(Xr)_1&&\\
	&\ddots&\\
	&&(Xr)_n
\end{pmatrix}
\]
Set
$$L=\frac{lY}{\sqrt{\lambda}},\quad R=\frac{Xr}{\sqrt{\lambda}}.$$
Then $L,R$ are the left and right eigenvectors of $A$ corresponding to the eigenvalue $\lambda$. And
$$\sum_{j=1}^nL_jR_j=LR=\frac{1}{\lambda}lYXr=lr=1.$$
Hence,
$$v_{i,j}=\frac{1}{\lambda}L_iA_{i,j}R_j.$$

\end{proof}

\end{document}